\title{Inferring Generative Model Structure with Static Analysis}
\date{}
\author[ ]{Paroma Varma, Bryan He, Payal Bajaj, Imon Banerjee, Nishith Khandwala, \\ Daniel L. Rubin, Christopher R\'{e}\vspace{-0.2cm}}
\affil[ ]{Stanford University}
\affil[ ]{ \texttt{\{paroma,bryanhe,pabajaj,imonb,nishith,rubin\}@stanford.edu, chrismre@cs.stanford.edu}}
\newcommand{\num}[1]{{#1}} 
\begin{document}

\maketitle

\begin{abstract}
Obtaining enough labeled data to robustly train complex discriminative models is a major bottleneck in the machine learning pipeline.
A popular solution is combining multiple sources of weak supervision using generative models.
The structure of these models affects training label quality, but is difficult to learn without any ground truth labels.
We instead rely on these weak supervision sources having some structure by virtue of being encoded programmatically.
We present Coral, a paradigm that infers generative model structure by statically analyzing the code for these heuristics, thus reducing the data required to learn  structure significantly.
We prove that Coral's sample complexity scales quasilinearly with the number of heuristics and number of relations found, improving over the standard sample complexity, which is exponential in $n$ for identifying $n^\textrm{th}$ degree relations.
Experimentally, Coral matches or outperforms traditional structure learning approaches by up to \num{3.81} F1 points. 
Using Coral to model dependencies instead of assuming independence results in better performance than a fully supervised model by \num{3.07} accuracy points when heuristics are used to label radiology data without ground truth labels.
\end{abstract}

\section{Introduction}
Deep neural networks and other complex discriminative models rely on a large amount of labeled training data for their success.
For many real-world applications, obtaining this magnitude of \emph{labeled} data is one of the most expensive and time consuming aspects of the machine learning pipeline.
Recently, generative models have been used to create training labels from various weak supervision sources, such as heuristics or knowledge bases, by modeling the true class label as a latent variable.
After the necessary parameters for the generative models are learned using unlabeled data, the distribution over the true labels can be inferred.
Properly specifying the structure of these generative models is essential in estimating the accuracy of the supervision sources. 
While traditional structure learning approaches have focused on the supervised case \cite{ravikumar2010high, meinshausen2006high, zhao2006model}, previous works related to weak supervision assume that the structure is user-specified \cite{alfonseca2012pattern, takamatsu2012reducing, roth2013combining, ratner2016data}.
Recently, \citet{bach2017learning} showed that it is possible to learn the structure of these models with a sample complexity that scales sublinearly with the number of possible binary dependencies.
However, the sample complexity scales exponentially for higher degree dependencies, limiting its ability to learn complex dependency structures.
Moreover, the time required to learn the dependencies also grows exponentially with the degree of dependencies, hindering the development of user-defined heuristics.

This poses a problem in many domains, where high degree dependencies are common among heuristics that operate over a shared set of inputs.
These inputs are interpretable characteristics extracted from the data.
For example, various approaches in computer vision use bounding box and segmentation attributes \cite{karpathy2015deep, redmon2016you, kang2017}, like location and size, to weakly supervise more complex image-based learning tasks \cite{dai2015boxsup, xia2013semantic,blaschko2010simultaneous, oquab2015object, branson2011strong}.
Another example comes from the medical imaging domain, where attributes include characteristics such as the area, intensity and perimeter of a tumor, as shown in Figure~\ref{fig:overview}.
Note that these attributes and the heuristics written over them are encoded programmatically.
There typically is a relatively small set of interpretable characteristics, so the heuristics often share these attributes.
This results in high order dependency structures among these sources, which are crucial to model in the generative model that learns accuracies for these sources.

To efficiently learn higher order dependencies, we present Coral, a paradigm that statically analyzes the source code of the weak supervision sources to infer, rather than learn, the complex relations among heuristics.
Coral's sample complexity scales quasilinearly with the number of relevant dependencies. 
Moreover, the time to identify these relations is constant in the degree of dependencies, since it only requires looking at the source code for each heuristic.
Specifically, Coral analyzes the code used to generate the weak supervision heuristics and its inputs to find which heuristics share the same inputs. 
This information is used to generate a dependency structure for the heuristics, and a generative model learns the proper weights for this structure to assign probabilistic labels to training data.

We experimentally validate the performance of Coral across various domains and show that it outperforms traditional structure learning under various conditions while being significantly more computationally efficient.
We show how modeling dependencies leads to an improvement of \num{3.81} F1 points compared to standard structure learning approaches. 
Additionally, we show that Coral can assign labels to data that has no ground truth labels, and this augmented training set results in improving the discriminative model performance by \num{3.07} points.
For a complex relation-based image classification task, 6 heuristic functions written using \emph{only} object label and location as primitives are able to train a model that comes within \num{0.74} points of the F1 score achieved by a fully-supervised model trained on the rich, hand-labeled attribute and relation information in the Visual Genome database \cite{krishna2016visual}.


\section{The Coral Paradigm}
The Coral paradigm takes as input a set of domain-specific primitives and a set of programmatic user-defined heuristic functions that operate over the primitives.
We formally define these abstractions in Section \ref{subsec: abstractions}.
Coral runs static analysis on the source code that creates the primitives and the heuristic functions to identify which sets of heuristics are  related by virtue of sharing primitives (Section~\ref{subsec:sa}).
Once Coral identifies these dependencies, it uses a factor graph to model the relationship between the heuristics, primitives and the true class label. 
We describe the conditions under which Coral can learn the structure of the generative model with significantly less data than traditional approaches in Section~\ref{subsec: model} and show how this affects generative model accuracy via simulations.
Finally, we discuss how Coral learns the accuracies of each heuristic and outputs probabilistic labels for the training data (Section~\ref{subsec: learning}).

\begin{figure}[htbp]
  \centering
  \includegraphics[scale=0.35]{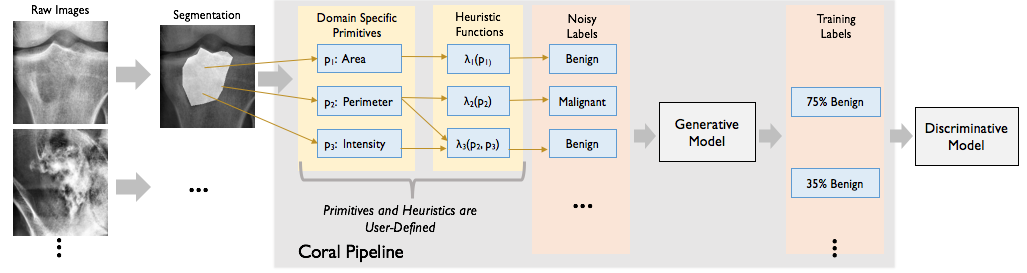}
  \caption{Running example for Coral paradigm. Users apply standard algorithms to segment tumors from the X-ray and extract the \emph{domain-specific primitives} from the image and segmentation. They write \emph{heuristic functions} over the primitives that output a noisy label for each image. The generative model takes these as inputs and provides probabilistic training labels for the discriminative model.}
  \label{fig:overview}
  \setlength{\belowcaptionskip}{-10pt}
\end{figure}

\subsection{Coral Abstractions}
\label{subsec: abstractions}

\paragraph{Domain-Specific Primitives}
Domain-specific primitives (DSPs) in Coral are the simplest elements that heuristic functions take as input and operate over.
DSPs in Coral have semantic meaning, making them interpretable for users. 
This is akin to the concept of language primitives in programming languages, where they are the smallest unit of processing with meaning.
The motivation for making the DSPs domain-specific instead of a general construct for the various data modalities is to allow users to take advantage of existing work in their field, which extracts meaningful characteristics from the raw data.

Figure~\ref{fig:overview} shows an example of a pipeline for bone tumor classification as malignant or benign, inspired by one of our real experiments. 
First, an automated segmentation algorithm is used to generate a binary mask for the tumor \cite{yi20163, kaus2001automated, oliver2010review, sharma2010automated}.
Then, we extract three DSPs from the segmentation: area($p_1$), perimeter($p_2$) and total intensity($p_3$) of the segmented area.
More complex characteristics such as texture, shape and edge features can also be used  \cite{haralick1973textural, banerjee2016computerized, kurtz2014combining} (see Appendix).

We now define a formal construct for programmatically encoding DSPs.
Users generate DSPs in Coral through a primitive specifier function, such as  \texttt{create\_primitives} in Figure~\ref{fig:dsp}(a).
Specifically, this function takes as input a single unlabeled data point (and necessary intermediate representations such as the segmentation) and returns an instance of \texttt{PrimitiveSet}, which maps primitive names to primitive values, like integers (we refer to a specific instance of this class as \texttt{P}). 
Note that \texttt{P.ratio} is composed of two other primitives, while the rest are generated independently from the image and segmentation.

\begin{figure}[htbp]
  \centering
  \includegraphics[scale=0.38]{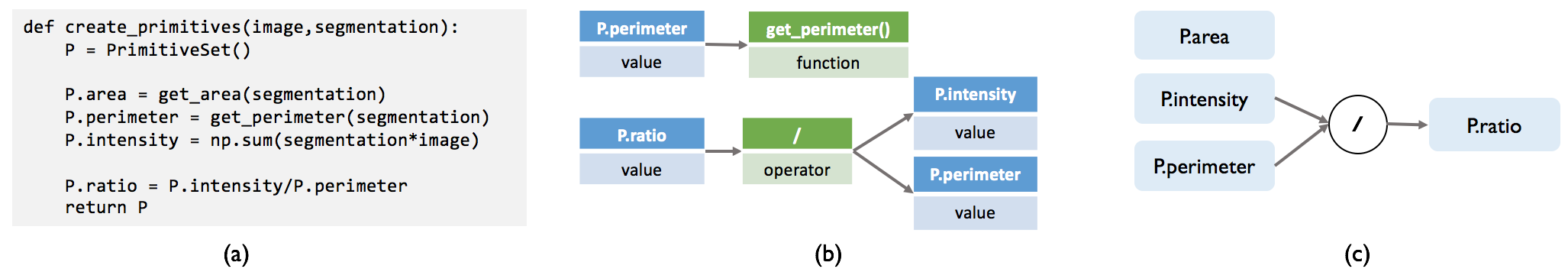}
  \caption{(a) The \texttt{create\_primitives} function that generates primitives. (b) Part of the AST for the \texttt{create\_primitives} function. (c) The composition structure that results from traversing the AST.}
  \label{fig:dsp}
  \setlength{\belowcaptionskip}{-10pt}
\end{figure}

\paragraph{Heuristic Functions}
In Coral, heuristic functions (HFs) can be viewed as a mapping from a subset of the DSPs to a noisy label for the training data, as shown in Figure~\ref{fig:overview}.
In our experience with user-defined HFs, we note that they are mostly nested if-then statements where each statement checks whether the value of a single primitive or a combination of them are above or below a user-set threshold (see Appendix).
As shown in Figure~\ref{fig:fg}(a), they take as input fields of the object \texttt{P} and return a label based on the value of the input primitives.
While our running example focuses on a single data point for DSP generation and HF construction, both procedures are applied to the entire training set to assign a set of noisy labels from each HF to each data point.

\subsection{Static Dependency Analysis}
\label{subsec:sa}
Since the number of DSPs in some domains are relatively small, multiple HFs can operate over the same DSPs, as shown with $\lambda_2$ and $\lambda_3$ in Figure~\ref{fig:overview} and Figure~\ref{fig:fg}(a).
HFs that share at least one primitive are trivially related to each other. 
Prior work learns these dependencies using the labels HFs assign to data points and its probability of success scales with the amount of data available \cite{bach2017learning}.
However, it can only learn dependencies among pairs of HFs efficiently, since the data required grows exponentially with the degree of the HF relation.
This in turn limits the complexity of the dependency structure this method can accurately learn and model.

\paragraph{Heuristic Function Inputs}
Coral takes advantage of the fact that users write HFs over a known, finite set of primitives.
It \emph{infers} dependencies that exist among HFs by simply looking at the source code of how the DSPs and HFs are constructed.
This process \emph{requires no data} to successfully learn the dependencies, making it more computationally efficient than standard approaches.
In order to determine whether any set of HFs share at least one DSP, Coral looks at the input for each HF.
Since the HFs only take as input the DSP they operate over, simply grouping HFs by the primitives they share is an efficient approach for recognizing these dependencies.
As shown in our running example, this would result in Coral not recognizing any dependencies among the HFs since the input for all three HFs are different  (Figure~\ref{fig:fg}(a)).
This, however, would be incorrect, since the primitive \texttt{P.ratio} is composed of \texttt{P.perimeter} and \texttt{P.intensity}, which makes $\lambda_2$ and $\lambda_3$ related.
Therefore, along with looking at the primitives that each HF takes as input, it is also essential to model how these primitives are \emph{composed}.

\paragraph{Primitive Compositions}
We use our running example in Figure~\ref{fig:dsp} to explain how Coral gathers information about DSP compositions.
Coral builds an abstract syntax tree (AST) to represent computation the \texttt{create\_primitives} function performs. 
An AST represents operations involving the primitives as a tree, as shown in Figure~\ref{fig:dsp}(b).

To find primitive compositions from the AST, Coral first finds the expressions in the AST that add primitives to \texttt{P} (denoted in the AST as \texttt{P.name}).
Then, for each assignment expression, Coral traverses the subtree rooted at the assignment expression and adds all other encountered primitives as a dependency for \texttt{P.name}.
If no primitives are encountered in the subtree, the primitive is registered as being independent of the rest. 
The composition structure that results from traversing the AST is shown in Figure~\ref{fig:dsp}(c), where \texttt{P.area}, \texttt{P.intensity}, and \texttt{P.perimeter} are independent while \texttt{P.ratio} is a composition.

\paragraph{Heuristic Function Dependency Structure}
 With knowledge of how the DSPs are composed, we return to our original method of looking at the inputs of the HFs. 
 As before, we identify that $\lambda_1$ and $\lambda_2$ use \texttt{P.area} and \texttt{P.perimeter}, respectively. 
However, we now know that $\lambda_3$ uses \texttt{P.ratio}, which is a composition of \texttt{P.intensity} and \texttt{P.perimeter}.
This implies that $\lambda_3$ will be related to any HF that takes either \texttt{P.intensity}, \texttt{P.perimeter}, or both as inputs.
We proceed to build a relational structure among the HFs and DSPs.
As shown in Figure~\ref{fig:fg}(b), this structure shows which \emph{independent} DSPs each HF operates over. 
This relational structure implicitly encodes dependency information about the HFs --- if an edge points from one primitive to $n$ HFs, those $n$ HFs are in an $n$-way relation by virtue of sharing that primitive.
This dependency information can more formally be encoded in a factor graph shown in Figure~\ref{fig:fg}(c) and discussed in the next section.
Note that we chose a particular programmatic setup for creating DSPs and HFs to explain how static analysis can infer dependencies; however, this process can be modified to work with other setups that encode DSPs and HFs as well.

\begin{figure}[htbp]
  \centering
  \includegraphics[scale=0.30]{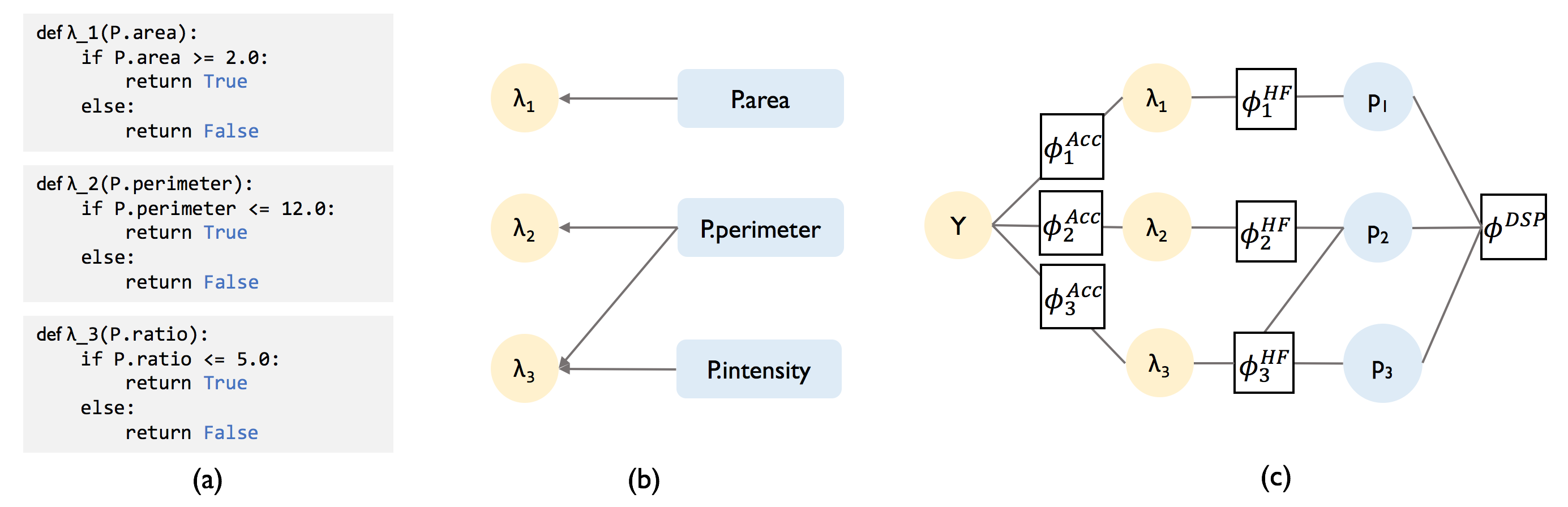}
  \caption{(a) shows the encoded HFs. (b) shows the HF dependency structure where DSP nodes have an edge going to the HFs that use them as inputs (explicitly or implicitly). (c) shows the factor graph Coral uses to model the relationship between HFs, DSPs, and latent class label Y.}
  \label{fig:fg}
\end{figure}

\subsection{Creating the Generative Model}
\label{subsec: model}
We now describe the generative model used to predict the true class labels.
The Coral model uses a factor graph (Figure~\ref{fig:fg}(c)) to model the relationship between the primitives ($p$), heuristic functions ($\lambda$), and latent class label ($Y \in \{-1,1\}$).
We show that by incorporating information about how primitives are shared across HFs from static analysis, this factor graph infers all dependencies between the heuristics that are guaranteed to be present.
In the next section, we describe how Coral recovers additional dependencies between the heuristics by studying empirical relationships between the primitives.

\paragraph{Modeling Heuristic Function Dependencies}
With dependencies inferred via static analysis, our goal is to learn the accuracies for each HF and assign labels to training data accordingly.
The factor graph thus consists of two types of factors: accuracy factors $\phi^{\textrm{Acc}}$ and HF factors from static analysis  $\phi^{\textrm{HF}}$.

The accuracy factors specify the accuracy of each heuristic function $\lambda_i$ and are defined as
\begin{align*}
  \phi^{\textrm{Acc}}_i(Y, \lambda_i) = Y\lambda_i,\ i = 1,\ldots, n
\end{align*}
where $n$ is the total number of heuristic functions. 

The static analysis factors ensure that the heuristics are correctly evaluated based on the HF dependencies found in the previous section.
They ensure that a probability of zero is given to any configuration where an HF does not have the correct value given the primitives it depends on.
The static analysis factors are defined as
\begin{align*}
  \phi^{\textrm{HF}}_i(\lambda_i, p_1, \ldots, p_m) = 
  \begin{cases}
    0 & \textrm{if $\lambda_i$ is valid given $p_1,\ldots, p_m$} \\
    -\infty & \textrm{otherwise}
  \end{cases},\ i = 1,\ldots, n
\end{align*}
Since these factors are obtained directly from static analysis, \emph{they can be recovered with no data}.

However, we note that static analysis is not sufficient to capture all dependencies required in the factor graph to accurately model the process of generating training labels.
Specifically, static analysis can 
\vspace{-5pt}
\begin{itemize}
\setlength\itemsep{0em}
\item[(i)] pick up spurious dependencies among HFs that are not truly dependent on each other, or 
\item[(ii)] miss key dependencies among HFs that exist due to dependencies among the DSPs in the HFs. 
\end{itemize}
\vspace{-5pt}
(i) can occur if some $\lambda_A$ takes as input DSPs $p_i, p_{j}$ and  $\lambda_B$ takes as input DSPs $p_i, p_{k}$, but $p_i$ always has the same value.
Although static analysis would pick up that  $\lambda_A$ and  $\lambda_B$ share a primitive and should have a dependency, this may not be true if $p_{j}$ and $p_{k}$ are independent.
(ii) can occur if two HFs depend on different primitives, but these primitives happen to always have the same value.
In this case, it is impossible for static analysis to infer the dependency between the HFs if the primitives have different names and are generated independently, as described in Section~\ref{subsec:sa}.
A more realistic example comes from our running example, where we would expect the area and perimeter of the tumor to be related.

To account for both cases, it is necessary to capture the possible dependencies that occur among the DSPs to ensure that the dependencies from static analysis do not misspecify the factor graph.
We introduce a factor to account for additional dependencies among the primitives, $\phi^{\textrm{DSP}}$.
There are many possible choices for this dependency factor, but one simple choice is to model pairwise similarity between the primitives.
This allows the dependency factor to be represented as
\begin{align*}
  \phi^{\textrm{DSP}}(p_1, \ldots, p_m) = \sum_{i<j}\phi^{\textrm{Sim}}_{ij}(p_i, p_j)\textrm{, where }
  \phi^{\textrm{Sim}}_{ij}(p_i, p_j) = \mathbb{I}[p_i = p_j].
\end{align*}

Finally, with three types of factors, the probability distribution specified by the factor graph is
\begin{align*}
  P(y, \lambda_1,\ldots,\lambda_n,p_1,\ldots,p_m)
  = \exp
  \left(
  \sum_{i=1}^{n}\theta^{\textrm{Acc}}_i\phi^{\textrm{Acc}}_i
  +
  \sum_{i=1}^{n}\phi^{\textrm{HF}}_i
  +
  \sum_{i=1}^{m}\sum_{j=i+1}^{m} \theta^{\textrm{Sim}}_{ij}\phi^{\textrm{Sim}}_{ij}
  \right)
\end{align*}
where $\theta^{\textrm{Acc}}$ and $\theta^{\textrm{Sim}}_{ij}$ are weights that specify the strength of factors $\phi^{\textrm{Acc}}$ and $\phi^{\textrm{Sim}}_{ij}$.

\paragraph{Inferring Dependencies without Data}

The HF factors capture all dependencies among the heuristic functions that are not represented by the $\phi^{\textrm{DSP}}$ factor.
The dependencies represented by the $\phi^{\textrm{DSP}}$ factor are precisely the dependencies that cannot be inferred via static analysis due to the fact that this factor depends solely on the content of the primitives.
It is therefore impossible to determine what this factor is without data.
 
While assuming that we have the true $\phi^{\textrm{DSP}}$ seems like strong condition, we find that in real-world experiments, including the $\phi^{\textrm{DSP}}$ factor rarely leads to improvements over the case when we only include the $\phi^{\textrm{Acc}}$ and $\phi^{\textrm{HF}}$ factors.
In some of our experiments (see Section~\ref{sec:exp}), we use bounding box location, size and object labels as domain-specific primitives for image and video querying tasks.
Since these primitives are not correlated, modeling the primitive dependency does not lead to any improvement over just modeling HF dependencies from static analysis.
Moreover, in other experiments in which modeling the relation among primitives helps, we observe relatively small benefits (up to $1.1$ points) above what modeling HF dependencies provides (up to $3.47$ points). 
Therefore, without any data, it is possible to model the most important dependencies among HFs that lead to significant gains over the case where no dependencies are modeled.

\subsection{Generating Probabilistic Training Labels}
\label{subsec: learning}
Given the probability distribution of the factor graph, our goal is to learn the proper weights $\theta^{\textrm{Acc}}_i$ and $\theta^{\textrm{Sim}}_{ij}$.
Coral adopts structure learning approaches described in recent work \cite{ravikumar2010high,bach2017learning} that learns dependency structures in the weak supervision setting and maximizes the $\ell_1$-regularized marginal pseudolikelihood of each primitive to learn the weights of the relevant factors.

To learn the weights of the generative model, we use contrastive divergence \cite{hinton2002training} as a maximum likelihood estimation routine and maximize the marginal likelihood of the observed primitives.
Gibbs sampling is used to estimate the intractable gradients, which are then used in stochastic gradient descent.
Because the HFs are typically deterministic functions of the primitives (represented as the $-\infty$ value of the correctness factors for invalid heuristic values), standard Gibbs sampling will not be able to mix properly.
As a result, we modify the Gibbs sampler to simultaneously sample one primitive along with all heuristics that depend on it.
Despite the fact that the true class label is latent, this process still converges to the correct parameter values \cite{ratner2016data}.
Additionally, the amount of data necessary to learn the parameters scales quasilinearly with the number of parameters.
In our case, the number of parameters is just the number of heuristics $n$ and number of relevant primitive similarity dependencies $s$.
We now formally state the conditions for this result, which match those of \citet{ratner2016data}, and provide the sample complexity of our method.
First, we assume that there exists some feasible parameter set $\Theta \subset\mathbb{R}^n$ that is known to contain the parameter $\theta^* = (\theta^{\textrm{Acc}},\ \theta^{\textrm{Sim}})$ that models the true distribution $\pi^*$ of the data:
\begin{align}
  \label{eq: exist}
  \exists \theta^* \in\Theta\textrm{ s.t. }\forall \pi^*(p_1, \ldots, p_m, Y) = \mu_{\theta}(p_1, \ldots, p_m, Y).
\end{align}
Next, we must be able to accurately learn $\theta^*$ if we are provided with labeled samples of the true distribution.
Specifically, there must be an asymptotically unbiased estimator $\hat\theta$ that takes some set of labeled data $T$ independently sampled from $\pi^*$ such that for some $c > 0$,
\begin{align}
  \label{eq: cov}
  \textrm{Cov}\left(\hat \theta(T)\right) \preceq \left(2c|T|\right)^{-1}I.
\end{align}
Finally, we must have enough sufficiently accurate heuristics so that we have a reasonable estimate of Y.
For any two feasible models $\theta_1,\theta_2\in\Theta$,
\begin{align}
  \label{eq: acc}
  \mathbb{E}_{(p_1,\ldots,p_m,Y)\sim \mu_{\theta_1}} \left[\textrm{Var}_{(p'_1,\ldots,p'_m,Y')\sim \mu_{\theta_2}}\left(Y'\mid p_1=p'_1,\ldots,p_m=p'_m\right)\right] \leq \frac{c}{n + s}
\end{align}

\begin{restatable}{proposition}{thmscale}
  \label{thm: scale}
  Suppose
  that we run stochastic gradient descent to produce estimates of the weights $\hat \theta = (\hat \theta^{\textrm{Acc}},\ \hat \theta^{\textrm{Sim}})$ in a setup satisfying conditions \eqref{eq: exist}, \eqref{eq: cov}, and \eqref{eq: acc}.
  Then, for any fixed error $\epsilon > 0$, if the number of unlabeled data points $N$ is at least $\Omega\left[(n + s) \log (n + s)\right]$, then our expected parameter error is bounded by $\mathbb{E}\left[\|\hat \theta - \theta^*\|^2\right] \leq \epsilon^2$.
\end{restatable}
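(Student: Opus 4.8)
The plan is to instantiate the general sample-complexity theorem for data programming of \citet{ratner2016data}, whose hypotheses are precisely conditions \eqref{eq: exist}, \eqref{eq: cov}, and \eqref{eq: acc}. The only genuinely Coral-specific ingredient is bookkeeping on the parameter dimension: the factor graph of Section~\ref{subsec: model} carries exactly one weight $\theta^{\textrm{Acc}}_i$ per heuristic and one weight $\theta^{\textrm{Sim}}_{ij}$ per retained similarity factor, so the learned vector $\theta^* = (\theta^{\textrm{Acc}},\ \theta^{\textrm{Sim}})$ lives in $\mathbb{R}^{n+s}$. Once we set $p := n + s$, the claimed bound $N = \Omega[(n+s)\log(n+s)]$ is the quasilinear-in-$p$ sample complexity of that theorem, with the suppressed constant absorbing the dependence on $\epsilon$ and $c$.

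I would carry out the argument in the following order. First, write the population objective as the marginal log-likelihood of the observed primitives, $\ell(\theta) = \mathbb{E}_{\pi^*}\big[\log \mu_\theta(p_1,\ldots,p_m)\big]$, obtained by summing $\mu_\theta$ over the latent $Y$; the modified Gibbs sampler of Section~\ref{subsec: learning} performs stochastic ascent on $\ell$ using the $N$ unlabeled points. Second, use condition \eqref{eq: cov} to certify local strong concavity of $\ell$: the covariance bound on the asymptotically unbiased \emph{supervised} estimator $\hat\theta(T)$ translates, through the Cram\'er--Rao relationship between estimator covariance and Fisher information, into a per-sample curvature lower bound of order $c$ on the expected log-likelihood near $\theta^*$. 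Third, control the cost of the latent label: the gap between the marginal gradient and the fully-observed gradient is governed by the conditional uncertainty in $Y$ given the primitives, which is exactly the quantity bounded in condition \eqref{eq: acc}. Fourth, bound the per-sample stochastic-gradient variance and concentrate the gradient in $\mathbb{R}^{p}$; the $\log p$ factor in the final rate is the dimension cost of this concentration across the $p$ coordinates. Finally, combine the strong-concavity constant, the gradient-variance bound, and the latency bias in the standard convergence analysis of SGD on a strongly concave objective to obtain $\mathbb{E}\big[\|\hat \theta - \theta^*\|^2\big] = O\!\big(p \log p / (c^2 N)\big)$ up to constants, so that $N = \Omega[(n+s)\log(n+s)]$ drives the error below $\epsilon^2$.

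The main obstacle is the latent label $Y$, which breaks the clean convexity-and-estimation picture available in the supervised case, and the decisive step is the third one: I must show that marginalizing out $Y$ perturbs the problem by an amount that does \emph{not} overwhelm the $\Theta(c)$ curvature supplied by \eqref{eq: cov}. This is exactly why condition \eqref{eq: acc} is stated with right-hand side $c/(n+s)$ rather than a fixed constant: the per-parameter conditional-variance budget must shrink like $1/p$ so that, once aggregated across all $p$ coordinates of the gradient, the latency-induced bias stays of the same order as $c$ and the effective objective remains $\Theta(c)$-strongly concave. Verifying that this balance holds, so that the dimension enters only quasilinearly and the marginalization over $Y$ does not inflate the rate, is the technical heart of the argument; the remaining pieces (curvature from Fisher information, gradient concentration, and the textbook strongly-convex SGD rate) are standard.
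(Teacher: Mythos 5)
Your proposal is correct and takes essentially the same route as the paper: the paper's proof is exactly the reduction you give in your first paragraph, namely verifying that conditions \eqref{eq: exist}, \eqref{eq: cov}, and \eqref{eq: acc} match the hypotheses of Theorem 2 of \citet{ratner2016data} and setting the parameter count $M = n + s$, after which the bound follows directly. The remaining paragraphs of your write-up re-derive the internals of that cited theorem, which the paper simply invokes as a black box.
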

The proof follows from the sample complexity of \citet{ratner2016data} and appears in the appendix.

With the weights $\hat \theta^{\textrm{Acc}}_i$ and $\hat \theta^{\textrm{Sim}}_{ij}$ maximizing the marginal likelihood of the observed primitives,
we have a fully specified factor graph and complete generative model, which can be used to predict the latent class label.
For each data point, we have the domain-specific primitives, from which the heuristic functions can compute noisy labels.
Through the accuracy factors, we then estimate a distribution for the latent class label and use these noisy labels to train a discriminative model.

\begin{figure}[tbhp]
  \centering
  \includegraphics[scale=0.4]{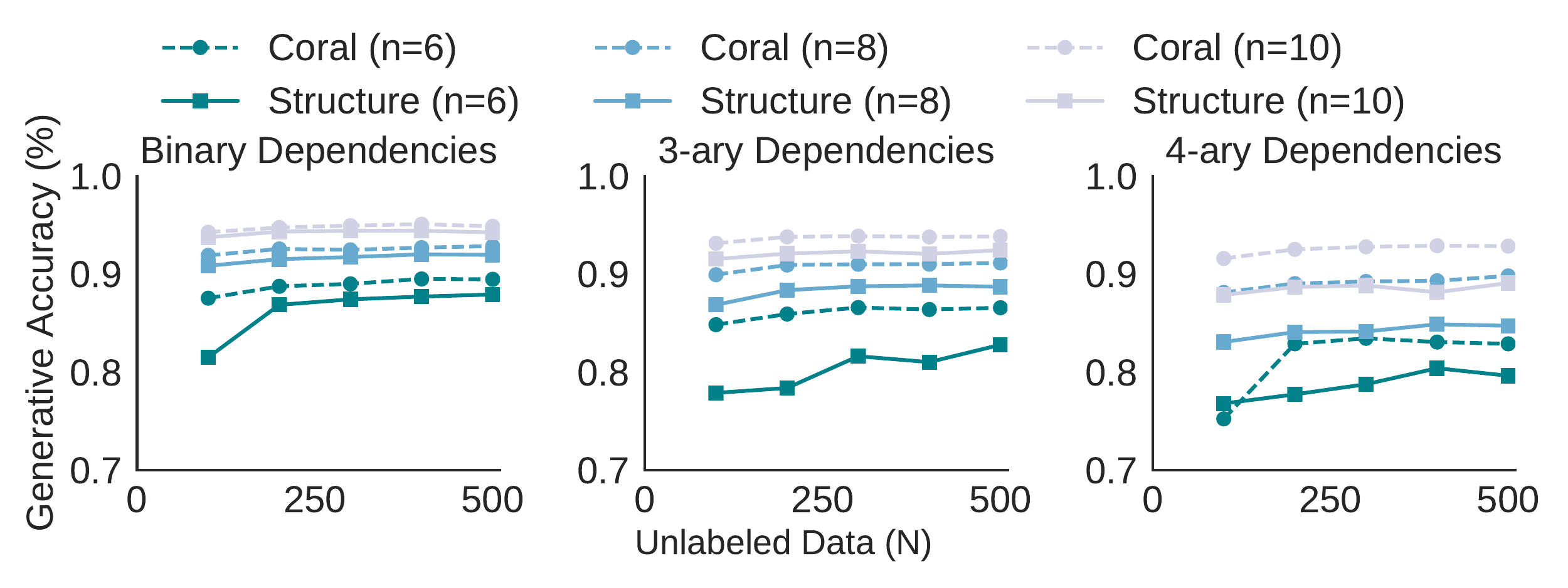}
  \caption{Simulation comparing the generative model accuracy with structure learning and Coral.}
  \label{fig: sim}
\end{figure}

In Figure \ref{fig: sim}, we present a simulation to empirically compare our sample complexity with that of structure learning \cite{bach2017learning}.
In our simulation, we have $n$ HFs, each with an accuracy of $75\%$, and explore settings with where there exists one binary, 3-ary and 4-ary dependency among the HFs. The dependent HFs share exactly one primitive, and the primitives themselves are independent ($s=0$).
As $N$ increases, both methods improve in performance due to improved estimates of the heuristic accuracies and dependencies.
In the case with a binary dependency, structure learning recovers the necessary dependency with few samples, and has similar performance to Coral.
In contrast, in the second and third settings with high-order dependencies, structure learning struggles to recover the relevant dependency, and consistently performs worse than Coral even as more training data is provided.

\section{Experimental Results}
\label{sec:exp}
We seek to experimentally validate the following claims about our approach. 
Our first claim is that HF dependencies inferred via static analysis (HF Dep) improve significantly over a model that does not take dependencies into account.
Second, we compare to a structure learning approach for weak supervision \cite{bach2017learning} and show how we outperform it over a variety of domains. 
Finally, we show that in case there are primitive dependencies, Coral can learn and model those as well (HF+DSP Dep).
Our classification tasks range from specialized medical domains to natural images and video, and we include details of the DSPs and HFs in the Appendix.
We compare our approach to generative models that learn the accuracies of different heuristics, specifically one that assumes the heuristics are independent (Indep), and \citet{bach2017learning} that \emph{learns} the inter-heuristic dependencies (Learn Dep). We also compare to majority vote (MV) and the fully supervised (FS) case, and measure the performance of the discriminative model trained with labels generated using the above methods.

\paragraph{Visual Genome and ActivityNet Classification} 
We explore how to extract complex relations in images and videos given object labels and their bounding boxes.
We used subsets of two datasets, Visual Genome \cite{krishna2016visual} and ActivityNet \cite{caba2015activitynet}, and defined our task as finding images of ``a person biking down a road'' and finding basketball videos, respectively.
For both tasks, a small set of DSPs were shared heavily among HFs, and modeling the dependencies observed by static analysis led to a significant improvement over the independent case.
Since these dependencies involved groups of 3 or more heuristics, Coral improved significantly, by up to 3.81 F1 points, over structure learning as well, which was unable to model these dependencies due to the lack of enough data.
Moreover, modeling primitive dependencies did not help since the primitives were indeed independent (Table~\ref{table:stats}).
We report our results for these tasks in terms of the F1 score, harmonic mean of the precision and recall.

\begin{figure}[htbp]
  \centering
  \includegraphics[scale=0.425]{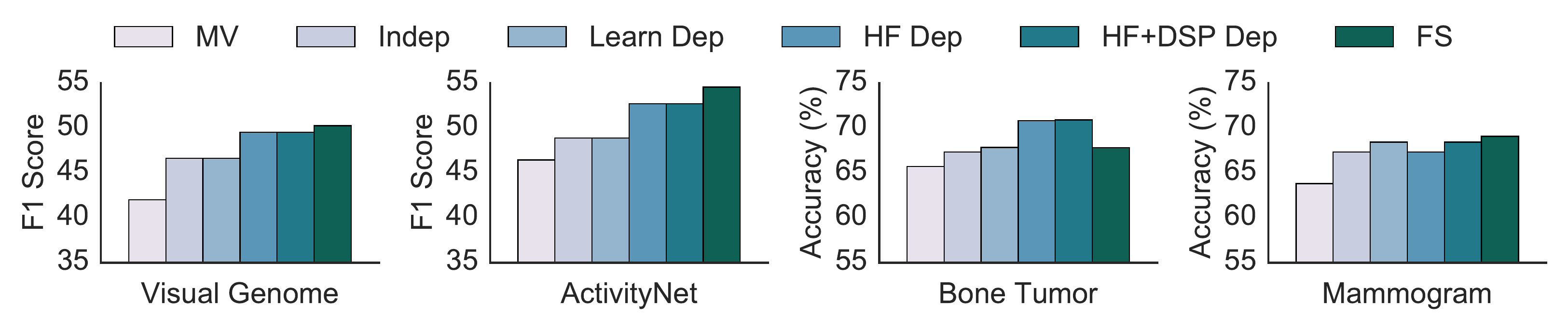}
  \caption{Discriminative model performance comparing HF Dep (HF dependencies from static analysis) and HF+DSP Dep (HF and DSP dependencies) to other methods. Numbers in Appendix.}
  \label{fig:results}
\end{figure}

\begin{table}
\centering
\caption{Heuristic Function (HF) and Domain-Specific Primitive (DSP) statistics. Discriminative model improvement with HF+DSP Dep over other methods. *improvements shown in terms of F1 score, rest in terms of accuracy. ActivityNet model is LR using VGGNet embeddings as features.}
\label{table:stats}
\small
\centering
\begin{tabular}{@{}ccccccccc@{}}
\toprule
\multirow{2}{*}{\textbf{Application}} & \multicolumn{3}{c}{\textbf{Statistics}}                      & \multirow{2}{*}{\textbf{Model}} & \multicolumn{4}{c}{\textbf{Improvement Over}}                   \\ \cmidrule(lr){2-4} \cmidrule(l){6-9} 
                                      & \textbf{\# DSPs} & \textbf{\# HFs} & \textbf{\# Shared DSPs} &                                 & \textbf{MV} & \textbf{Indep} & \textbf{Learn Dep} & \textbf{FS} \\ \midrule
Visual Genome                         & 7                & 5               & 2                       & GoogLeNet                       & 7.49*       & 2.90*          & 2.90*              & -0.74*       \\
ActivityNet                           & 5                & 4               & 2                       & VGGNet+LR                          & 6.23*       & 3.81*          & 3.81*              & -1.87*       \\ \midrule
Bone Tumor                            & 17               & 7               & 0                       & LR                              & 5.17        & 3.57           & 3.06               & 3.07        \\
Mammogram                             & 6                & 6               & 0                       & GoogLeNet                       & 4.62        & 1.11           & 0                  & -0.64       \\ \bottomrule
\end{tabular}
\end{table}

\paragraph{Bone Tumor Classification}
We used a set of 802 labeled bone tumor X-ray images along with their radiologist-drawn segmentations.
We defined our task to differentiate between aggressive and non-aggressive tumors, generated HFs that were a combination of hand-tuned rules and decision-tree generated rules (tuned on a small subset of the dataset).
The discriminative model utilized a set of 400 hand-tuned features (note that there is no overlap between these features and the DSPs) that encoded various shape, texture, edge and intensity-based characteristics.
Although there were no explicitly shared primitives in this dataset, the generative model was still able to model the training labels more accurately with knowledge of \emph{how many primitives} the heuristics operated over, thus improving over the independent case significantly. 
Moreover, a small dataset size hindered structure learning, which gave a minimal boost over the independent case (Table~\ref{table:stats}).
When we used heuristics in Coral to label an additional 800 images that had no ground truth labels, we beat the previous FS score by \num{3.07} points (Figure~\ref{fig:results}, Table~\ref{table:stats}).

\paragraph{Mammogram Tumor Classification}
We used the DDSM-CBIS \cite{ddsm} dataset, which consists of 1800 scanned film mammograms and associated segmentations for the tumors in the form of binary masks.
We defined our task to identify a tumor as malignant or benign. Each heuristic only operated over one primitive, resulting in no dependencies that static analysis could identify. 
In this case, structure learning performed better than Coral when we only used static analysis to infer dependencies (Figure~\ref{fig:results}). 
However, including primitive dependencies allowed us to match structure learning, resulting in a  $\num{1.11}$ point improvement over the independent case (Table~\ref{table:stats}).

\section{Related Work}

As the need for labeled training data grows, a common alternative is to utilize weak supervision sources such as distant supervision \cite{craven1999constructing,mintz2009distant}, multi-instance learning \cite{riedel2010modeling,hoffmann2011knowledge}, and heuristics \cite{bunescu2007learning,shin2015incremental}. 
Specifically for images, weak supervision using object detection and segmentation or visual databases is a popular technique as well (detailed discussion in Appendix).
Estimating the accuracies of these sources without access to ground truth labels is a classic problem \cite{dawid1979maximum}. 
Methods such as crowdsourcing, boosting, co-training, and learning from noisy labels are some of the popular approaches that can combine various sources of weak supervision to assign noisy labels to data (detailed discussion in Appendix).
However, Coral does not require \emph{any} labeled data to model the dependencies among the domain-specific primitives and heuristics, which can be interpreted as workers, classifiers or views for the above methods. 

Recently, generative models have also been used to combine various sources of weak supervision  \cite{alfonseca2012pattern, takamatsu2012reducing,  roth2013combining}.
One specific example, data programming \cite{ratner2016data}, proposes using multiple sources of weak supervision for text data in order to describe a generative model and subsequently learns the accuracies of these source.
Coral also focuses on multiple programmatically encoded heuristics that can weakly label data and learns their accuracies to assign labels to training data.
However, Coral adds an additional layer of domain-specific primitives in its generative model, which allows it to generalize beyond text-based heuristics.
It also \emph{infers} the dependencies among the heuristics and the primitives, rather than requiring users to specify them.
Other previous work also assume that this structure in generative models is user-specified \cite{alfonseca2012pattern, takamatsu2012reducing,  roth2013combining}.
Recently, \citet{bach2017learning} showed that it is possible to learn the dependency structure among sources of weak supervision with a sample complexity that scales sublinearly with the number of possible pairwise dependencies.
Coral instead identifies the dependencies among the heuristic functions by inspecting the content of the programmable functions, therefore relying on significantly less data to learn the generative model structure.
Moreover, Coral can also pick up higher-order dependencies, for which structured learning needs large amounts of data to detect.


%
\section{Conclusion and Future Work}
In this paper, we introduced Coral, a paradigm that models the dependency structure of weak supervision heuristics and systematically combines their outputs to assign probabilistic labels to training data.
We described how Coral takes advantage of the programmatic nature of these heuristics in order to infer dependencies among them via static analysis and requires a sample complexity that is quasilinear in the number of heuristics and relations found.  
We showed how Coral leads to significant improvements in discriminative model accuracy over traditional structure learning approaches across various domains.
Coral scratches the surface of the possible ways weak supervision can borrow from the field of programming languages, especially as they are applied to large magnitudes of data and need to be encoded programmatically. 
We look at a natural extension of treating the process of encoding heuristics as writing functions and hope to further explore the interactions between systematic training set creation and concepts from the programming language field.

\small
\paragraph*{Acknowledgments}
We thank Shoumik Palkar, Stephen Bach, and Sen Wu for their helpful conversations and feedback.
We are grateful to Darvin Yi for his assistance with the DDSM dataset based experiments and associated deep learning models.
We acknowledge the use of the bone tumor dataset annotated by Drs. Christopher Beaulieu and Bao Do and carefully collected over his career by the late Henry H. Jones, M.D. (aka ``Bones Jones''). 

This material is based on research sponsored by Defense Advanced Research Projects Agency (DARPA) under agreement number FA8750-17-2-0095. 
We gratefully acknowledge the support of the DARPA SIMPLEX program under No. N66001-15-C-4043,
DARPA FA8750-12-2-0335 and FA8750-13-2-0039,
DOE 108845, 
the National Science Foundation (NSF) Graduate Research Fellowship under No. DGE-114747,
Joseph W. and Hon Mai Goodman Stanford Graduate Fellowship, 
the Moore Foundation, 
National Institute of Health (NIH) U54EB020405,
the Office of Naval Research (ONR) under awards No. N000141210041 and No. N000141310129, 
the Moore Foundation,
the Okawa Research Grant, 
American Family Insurance,
Accenture,
Toshiba, and Intel.  
This research was supported in part by affiliate members and other supporters of the Stanford DAWN project: Intel, Microsoft, Teradata, and VMware.

The U.S. Government is authorized to reproduce and distribute reprints for Governmental purposes notwithstanding any copyright notation thereon.
The views and conclusions contained herein are those of the authors and should not be interpreted as necessarily representing the official policies or endorsements, either expressed or implied, of DARPA or the U.S. Government.
Any opinions, findings, and conclusions or recommendations expressed in this material are those of the authors and do not necessarily reflect the views of DARPA, AFRL, NSF, NIH, ONR, or the U.S. government.

\newpage
\begin{small}
\setlength{\bibsep}{4.5pt}
\bibliographystyle{abbrvnat}
\bibliography{coral_bib}
\end{small}

\newpage
\appendix
\section{Extended Related Works}
We provide additional details here of how writing heuristics over primitives is a popular technique, making Coral widely applicable to existing methods as well. 
We then discuss methods that are related to Coral in combining various sources of weak supervision to generate noisy labels for the data at hand. 
\paragraph{Weak Supervision over Primitives}
A popular approach for creating training sets is to provide weak or distant supervision to label data based on  information from a knowledge base \cite{craven1999constructing, mintz2009distant}, crowdsourcing \cite{dawid1979maximum}, heuristic patterns \cite{bunescu2007learning,hearst1992automatic}, user input \cite{shin2015incremental}, a set of user-defined labeling functions \cite{ratner2016data}, or hand-engineered constraints \cite{stewart2017label}.
Our inspiration for Coral came from observing various weak supervision and image description techniques developed in the field of computer vision.
Early work  looked at describing images in terms of a set of primitives to find instances of described objects in images \cite{fischler1973representation, brooks1979acronym}. 
More recently, learned characteristics such as bounding boxes from object detection and areas from image segmentation have been used in order to weakly supervise more complex image-based learning tasks \cite{dai2015boxsup, xia2013semantic,blaschko2010simultaneous, oquab2015object, branson2011strong}.
Moreover, the recent development of a `knowledge base' for images, Visual Genome \cite{krishna2016visual}, has provided access to a image database with rich, crowdsourced attribute and relational information.
This data in turn powers other methods that rely on the information in Visual Genome to supervise other tasks \cite{lu2016visual, fukui2016multimodal}. 
This trend of using the possibly noisy information about the data at hand in order to weakly supervise models that look for complex relationships is one of the main motivations for Coral, which is a domain-agnostic method that can combine various sources of weak supervision while modeling the relationship among the sources as well as the primitives they operate over. 

A similar methodology of using interpretable characteristics of images for classification tasks exists in the medical field as well \cite{barb2005knowledge, raicu2009modelling}.
Other techniques such as content-based image retrieval (CBIR) rely on low-level, quantitative features of images, such as tumor texture, obtained from image analysis technique to query similar images in a database of radiology images. 
However, there remains a gap between these content-based primitives and semantic, interpretable primitives that humans use in order to describe these images \cite{kurtz2014combining}.
Coral could combine the two sources of supervision in order to build a system that uses the two methodologies in an optimal manner, while also helping discover correlations among the semantic and quantitative primitives.

\paragraph{Generative Models to Generate Training Labels}
Crowdsourcing is another area where the estimating worker accuracy without hand-labeled data is a well-studied problem \cite{dalvi2013aggregating,joglekar2015comprehensive,zhang2016spectral}. 
Boosting is also a related method that combines the output of multiple weak classifiers to create a stronger classifier \cite{schapire2012boosting}, and recent work has further looked at leveraging unlabeled data \cite{balsubramani2015scalable}.
Co-training is another process that uses a small labeled dataset and a large amount of unlabeled data by selecting two conditionally independent views of the data \cite{blum1998combining}.

\section{Additional Experimental Results}
We present the results from Section~\ref{sec:exp} in a table format here.
We also provide a list of the domain-specific primitives and heuristic functions used in each of our experiments, with a short description of what they mean.

\begin{table}[bthp]
\centering
\caption{Numbers for Figure~\ref{fig:results}(*ed have F1 scores, rest are accuracy). MV: majority vote; Indep: assume heuristics independent; Learn: model learned dependencies; HF-Dep: model heuristic dependencies from static analysis; HF+DSP Dep: also model primitive dependencies; FS: fully supervised}
\label{table:results}
\begin{tabular}{@{}ccccccc@{}}
\toprule
\textbf{Application} & \textbf{MV} & \textbf{Indep} & \textbf{Learn Dep} & \textbf{HF Dep} & \textbf{HF+DSP Dep} & \textbf{FS} \\ \midrule
Visual Genome*       & 42.02       & 46.61          & 46.61              & 49.51           & 49.51               & 50.25       \\
ActivityNet*         & 46.44       & 48.86          & 48.86              & 52.67           & 52.67               & 54.54       \\ \midrule
Bone Tumor           & 65.72       & 67.32          & 67.83              & 70.79           & 70.89               & 67.82       \\
Mammogram            & 63.8        & 67.31          & 68.42              & 67.31           & 68.42               & 69.06       \\ \bottomrule
\end{tabular}
\end{table}

\newpage
\section{Proof of \Cref{thm: scale}}
\thmscale*
\begin{proof}
The proof of \cref{thm: scale} closely follows Theorem 2 of \citet{ratner2016data}.
First, notice that all of the necessary conditions of this theorem are satisfied.
The number of weights to be learned in this theorem is $M$.
In our setting, $M = n + s$.
Notice that if we have at least $\Omega\left[(n + s)\log(n + s)\right]$ unlabeled data points, then we satisfy the conditions of the theorem.
As a result, the bound on the expected parameter error directly follows.
\end{proof}
%
%
%
%
%

\newpage
\begin{table}
\centering
\caption{Domain-specific Primitives for Mammogram Tumor Classification}
\label{table:primitives_1}
  \begin{tabularx}{\textwidth}{llX}
 \toprule
 \textbf{Primitives} & \textbf{Type} & \textbf{Description}\\ 
 \midrule
 \texttt{area} & \texttt{float} & area of the tumor\\ 
 \texttt{diameter} & \texttt{float} & diameter of the tumor\\ 
 \texttt{eccentricity} & \texttt{float} & eccentricity of the image\\
 \texttt{perimeter} & \texttt{float} & perimeter of the tumor\\
 \texttt{max\_intensity} & \texttt{float} & maximum intensity of the tumor\\
 \texttt{mean\_intensity} & \texttt{float} & mean intensity of the tumor\\ 
 \bottomrule
\end{tabularx}
\end{table}

\begin{table}
\centering
\caption{Domain-specific Primitives for Image Querying}
\label{table:primitives_3}
  \begin{tabularx}{\textwidth}{llX}
  \toprule
 \textbf{Primitives} & \textbf{Type} & \textbf{Description}\\ 
 \midrule
 \texttt{person} & \texttt{bool} & image contains a person/man/woman \\
 \texttt{person.position} & \texttt{(float,float)} & coordinates of the person\\
 \texttt{person.area} & \texttt{float} & area of bounsing box of person\\
 \texttt{road} & \texttt{bool} & image contains a road/street\\
 \texttt{car} & \texttt{bool} & image contains car/bus/truck\\  
 \texttt{bike} & \texttt{bool} & image contains a bike/bicycle/cycle\\
 \texttt{bike.position} & \texttt{(float,float)} & coordinates of the bike\\
 \texttt{bike.area} & \texttt{float} & area of bounding box of bike\\ 
 \bottomrule
\end{tabularx}
\end{table}
 
\begin{table}
\centering
\caption{Domain-specific Primitives for Video Classification}
\label{table:primitives_4}
  \begin{tabularx}{\textwidth}{llX}
  \toprule
 \textbf{Primitives} & \textbf{Type} & \textbf{Description}\\ 
 \midrule
 \texttt{person} & \texttt{bool} & image contains a person\\  
 \texttt{ball} & \texttt{bool} & image contains sports ball\\
 \texttt{(person.top,person.bottom)} & \texttt{(float,float)} & top and bottom coordinates of the person\\  
 \texttt{(ball.top,ball.bottom)} & \texttt{(float,float)} & top and bottom coordinates of the ball\\ 
 \texttt{ball.color} & \texttt{(float, float, float)} & R,G,B colors of the ball \\
 \texttt{vertical\_distance} & \texttt{float} & cumulative absolute difference in \texttt{ball.top} values over frames\\
 \bottomrule
\end{tabularx}
\end{table} 
 
\begin{table}
\centering
\caption{Domain-specific Primitives for Bone Tumor Classification}
\label{table:primitives_2}
  \begin{tabularx}{\textwidth}{llX}
  \toprule
 \textbf{Primitives} & \textbf{Type} & \textbf{Description}\\ 
    \midrule
 \texttt{daube\_hist\_164} & \texttt{float} & Daubechies features\\
 \texttt{daube\_hist\_224} & \texttt{float} & Daubechies features\\
 \texttt{daube\_hist\_201} & \texttt{float} & Daubechies features\\
 \texttt{window\_std} & \texttt{float} & \\ 
 \texttt{window\_median} & \texttt{float} & \\ 
 \texttt{scale\_median} & \texttt{float} & \\ 
 \texttt{lesion\_density} & \texttt{float} & quantify edge sharpness along the lesion contour\\
 \texttt{edge\_sharpness} & \texttt{float} & quantify edge sharpness along the lesion contour\\ 
 \texttt{equiv\_diameter} & \texttt{float} & describe the morphology of the lesion\\ 
 \texttt{area} & \texttt{float} & describe the morphology of the lesion\\ 
 \texttt{perimeter} & \texttt{float} & describe the morphology of the lesion\\ 
 \texttt{area\_perimeter\_ratio} & \texttt{float} & ratio of \texttt{area} and \texttt{perimeter}\\ 
 \texttt{shape\_solidity} & \texttt{float} & \\ 
 \texttt{laplacian\_entropy} & \texttt{float} & Laplacian energy features\\ 
 \texttt{sobel\_entropy} & \texttt{float} & Sobel energy features\\
 \texttt{glcm\_contrast} & \texttt{float} & capture occurrence of gray level pattern within the lesion\\
 \texttt{glcm\_homogeneity} & \texttt{float} & capture occurrence of gray level pattern within the lesion\\
 \texttt{histogram\_egde} & \texttt{float} & quantify edge sharpness along the lesion contour\\
 \texttt{mean\_diff\_in\_out} & \texttt{float} & \\
 \bottomrule
\end{tabularx}
\end{table}

\begin{table}
\centering
\caption{Heuristic Functions for Mammogram Tumor Classification}
\label{table:lfs_1}
\begin{tabularx}{\textwidth}{lXX}
 \toprule
 \textbf{Name} & \textbf{Heuristic Function} & \textbf{Description}\\ 
 \midrule
 \texttt{hf\_area} & 1 if \texttt{area}$>=$100000; -1 if \texttt{area}$<=$30000 & Large tumor area indicates malignant tumors\\  
 \texttt{hf\_diameter} & 1 if \texttt{diameter}$>=$400; -1 if \texttt{diameter}$<=$200 & High diameter indicates malignant tumors\\ 
 \texttt{hf\_eccentricity} & 1 if \texttt{eccentricity}$<=$0.4; -1 if \texttt{eccentricity}$>=$0.6 & Low eccentricity indicates malignant tumors\\
 \texttt{hf\_perimeter} & 1 if \texttt{perimeter}$>=$4000; -1 if \texttt{perimeter}$<=$2500 & High perimeter indicates malignant tumors\\
 \texttt{hf\_max\_intensity} & 1 if \texttt{max\_intensity}$>=$70000; -1 if \texttt{max\_intensity}$<=$50000 & High maximum intensity indicates malignant tumors\\
 \texttt{hf\_mean\_intensity} & 1 if \texttt{mean\_intensity}$>=$45000; -1 if \texttt{mean\_intensity}$<=$30000& High mean intensity indicates malignant tumors\\
 \bottomrule  
\end{tabularx}
\end{table}

\begin{table}
\centering
\caption{Heuristic Functions for Image Querying}
\label{table:lfs_3}
  \begin{tabularx}{\textwidth}{lXX}
 \toprule
 \textbf{Name} & \textbf{Heuristic Function} & \textbf{Description}\\ 
 \midrule
 \texttt{hf\_street} & 1 if \texttt{person} and \texttt{road}; -1 if \texttt{person} and \texttt{!road}; 0 otherwise & Indicates if a street is present when person is present, doesn't assign a label when person is not present\\
 \texttt{hf\_vehicles} & 1 if \texttt{person} and \texttt{car}; -1 if \texttt{person} and \texttt{!car}; 0 otherwise & Indicates if a car is present when person is present, doesn't assign a label when person is not present\\
 \texttt{hf\_positions} & -1 if \texttt{!bike} or \texttt{!person}; 1 if (\texttt{person.position}-\texttt{bike.position})$<=$1; 0 otherwise & Indicates if person and bike are close in the image\\
 \texttt{hf\_size} & 0 if \texttt{!bike} or \texttt{!person}; -1 if (\texttt{person.area}-\texttt{bike.area})>=1000; 1 otherwise & Indicates if the difference in area of the bike and person is less than a threshold\\
 \texttt{hf\_number} & -1 if \texttt{!bike} or \texttt{!person}; 1 if \texttt{num\_persons}$=$\texttt{num\_bikes}; -1 if \texttt{num\_persons}$=$\texttt{num\_bikes}; 0 otherwise & Indicates if number of persons and bikes are equal\\ 
 \bottomrule 
\end{tabularx}
\end{table}

\begin{table}
\centering
\caption{Heuristic Functions for Video Classification}
\label{table:lfs_4}
  \begin{tabularx}{\textwidth}{lXX}
  \toprule
 \textbf{Name} & \textbf{Heuristic Function} & \textbf{Description}\\ 
 \midrule
 \texttt{hf\_person\_ball} & 1 if \texttt{person} and \texttt{ball}; -1 otherwise & Indicates if person and sports ball were present in any frame of the video\\  
 \texttt{hf\_distance} & -1 if \texttt{person.top}-\texttt{ball.bottom}$>=$2; 1 if \texttt{person.top}-\texttt{ball.bottom}$<=$1; 0 otherwise & Indicates if the distance between person and sports ball is less than a threshold \\ 
 \texttt{hf\_ball\_color} & 1 if \texttt{ball} and \texttt{ball.color}-\texttt{basketball.color} $<=$ 80; -1 otherwise & Indicates if the color of the ball is similar to the color of a basketball\\
 \texttt{hf\_temporal} & 1 if \texttt{vertical\_distance} $>=$15; -1 otherwise & Indicates if sufficient vertical distance was covered by the ball over frames\\
 \bottomrule
\end{tabularx}
\end{table}

\begin{table}
\centering
\caption{Heuristic Functions for Bone Tumor Classification}
\label{table:lfs_2}
  \begin{tabularx}{\textwidth}{lX}
 \toprule
 \textbf{Name} & \textbf{Heuristic Function} \\ 
 \midrule
 \texttt{hf\_daube} & 1 if \texttt{histogram\_164} $<$  0.195545 and \texttt{histogram\_224} $<$ -0.469812; -1 if \texttt{histogram\_164} $<$  0.195545 and \texttt{histogram\_201} $<$ 0.396779; 1 if \texttt{histogram\_164} $<$  0.195545; -1 otherwise\\  
 \texttt{hf\_edge} & -1 if \texttt{window\_std} $<$ -0.0402606; -1 if \texttt{window\_median} $<$ -0.544591; -1 if \texttt{scale\_median} $<$ -0.512551; 1 otherwise\\ 
 \texttt{hf\_lesion} & -1 if \texttt{lesion\_density} $<$ 0; 1 if \texttt{lesion\_density} $>$ 1 and \texttt{edge\_sharpness} $<$ 0; 1 if \texttt{lesion\_density} $>$ 5 and \texttt{edge\_sharpness} $>$ -1; 0 otherwise\\ 
 \texttt{hf\_shape} & -1 if \texttt{equiv\_diameter} $<$ -0.3; -1 if \texttt{equiv\_diameter} $>$ 0 and \texttt{area\_perimeter\_ratio} $<$ 0.5 and \texttt{shape\_solidity} $<$ 0.1; 1 if \texttt{equiv\_diameter} $>$ 0 and \texttt{area\_perimeter\_ratio} $<$ 0.5 and \texttt{shape\_solidity} $>$ 0.75; 1 if \texttt{equiv\_diameter} $>$ 0 and \texttt{area\_perimeter\_ratio} $>$ 1; 0 otherwise\\ 
 \texttt{hf\_sobel\_laplacian} & -1 if \texttt{laplacian\_entropy} $<$ 0.2; 1 if \texttt{laplacian\_entropy} $>$ 0.4 and \texttt{sobel\_entropy} $<$ -0.75; 1 if \texttt{laplacian\_entropy} $>$ 0.4 and \texttt{sobel\_entropy} $>$ -0; 0 otherwise\\ 
 \texttt{hf\_glcm} & -1 if \texttt{gclm\_contrast} $<$ 0.15 and \texttt{gclm\_homogeneity} $<$ 0; 1 if \texttt{gclm\_contrast} $<$ 0.15 and \texttt{gclm\_homogeneity} $>$ 0.5; -1 if \texttt{gclm\_contrast} $>$ 0.25; 0 otherwise\\ 
 \texttt{hf\_first\_order} & -1 if \texttt{histogram\_egde} $<$ 0.5; 1 if \texttt{histogram\_egde} $>$ -0.3 and \texttt{mean\_diff\_in\_out} $<$ -0.75; 1 if \texttt{histogram\_egde} $>$ -0.3 and \texttt{mean\_diff\_in\_out} $>$ -0.5; 0 otherwise\\  
 \bottomrule 
\end{tabularx}
  \phantom{.}\\
\raggedright We omit the descriptions of heuristic functions for bone tumor classification due to their complexity.
\end{table}

\end{document}